\RequirePackage[svgnames]{xcolor}
\documentclass[11pt,letterpaper]{mystyle}
\usepackage[comma,authoryear,compress]{natbib}
\usepackage[T1]{fontenc}
\usepackage[utf8]{inputenc}
\usepackage{amsmath,amssymb,amsthm}
\usepackage{graphicx}
\usepackage{booktabs,multirow,array}
\usepackage{xcolor,colortbl}
\usepackage{adjustbox}
\usepackage{float}
\usepackage{placeins}
\usepackage{hyperref}
\usepackage{url}
\usepackage{xurl}
\usepackage{eso-pic}
\definecolor{ProCreditInk}{HTML}{293F59}
\definecolor{ProCreditPanel}{HTML}{F3F6FA}
\colorlet{TinaCrimson}{ProCreditInk}
\renewcommand{\titlefont}{\raggedright\color{ProCreditInk}\normalfont\fontsize{19}{23}\selectfont}
\tcbset{titlebox/.append style={colback=ProCreditPanel,colframe=ProCreditPanel}}
\definecolor{tablegroup}{gray}{0.93}
\newcommand{\score}[2]{\mbox{#1\textsubscript{\normalfont\scriptsize\ensuremath{\pm}#2}}}
\newcommand{\bestscore}[2]{\score{\textbf{#1}}{#2}}
\newcommand{\secondscore}[2]{\score{\underline{#1}}{#2}}
\newcommand{\scoreonly}[1]{\mbox{#1}}

\hypersetup{
  colorlinks=true,
  allcolors=YaleBlue,
  pdftitle={ProCredit: From Outcome Rewards to Progress Credit in Agentic Reinforcement Learning},
  pdfauthor={Ming Ma, Yi Zhu, Yiran Zhong, Feida Zhu, Chonghan Liu, Pengkun Jiao, Qichao Wang, Yanhao Jia, Tianming Yang, Steven Hoi}
}
\newtheorem{proposition}{Proposition}
\title{ProCredit: From Outcome Rewards to\\Progress Credit in Agentic Reinforcement Learning}
\runningtitle{ProCredit: From Outcome Rewards to Progress Credit in Agentic Reinforcement Learning}
\renewcommand\Affilfont{\centering\normalfont\fontsize{10}{14}\selectfont}
\makeatletter
\renewcommand\AB@affilsepx{\protect\\\protect\Affilfont}
\makeatother
\author[ ]{%
  Ming Ma$^{1,2}$, Yi Zhu$^{3,*}$, Yiran Zhong$^{3,*}$, Feida Zhu$^3$, Chonghan Liu$^4$,\\
  Pengkun Jiao$^3$, Qichao Wang$^5$, Yanhao Jia$^5$, Tianming Yang$^1$, Steven Hoi$^3$
}
\affil[1]{Institute of Neuroscience, Chinese Academy of Sciences}
\affil[2]{University of Chinese Academy of Sciences}
\affil[3]{Tongyi Lab, Alibaba Group}
\affil[4]{University of California, Los Angeles}
\affil[5]{Nanyang Technological University}
\affil[ ]{\texttt{mam2022@ion.ac.cn}, \texttt{zhu.yee@outlook.com}, \texttt{zhongyiran@gmail.com}}
\correspondingauthor{Yi Zhu (\href{mailto:zhu.yee@outlook.com}{zhu.yee@outlook.com}); Yiran Zhong (\href{mailto:zhongyiran@gmail.com}{zhongyiran@gmail.com}). $^{*}$ Corresponding authors.}
\date{}
\begin{document}
\AddToShipoutPictureFG*{%
  \AtPageUpperLeft{%
    \put(\LenToUnit{1.9cm},\LenToUnit{-1.55cm}){\includegraphics[height=0.9cm]{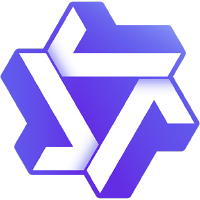}}%
    \put(\LenToUnit{1.9cm},\LenToUnit{-1.8cm}){\rule{\textwidth}{0.5pt}}%
  }%
}
\begin{abstract}
Long-horizon agentic tasks require an agent to modify an environment through a sequence of tool calls, with success determined by the final state. The standard recipe assigns a single outcome reward at the end and compares trajectories sampled for the same task. As a result, a group with no successful trajectory yields no training signal, failed attempts cannot be told apart by how close they came to completion, and turns that advance the task receive the same credit as turns that only query the environment. Prior work refines the unit of comparison from the trajectory to the step, or trains a reward model to supply intermediate signal: the former still derives its signal from final success alone, and the latter estimates it with a model. We observe that the acceptance checks that decide success can also be run on intermediate states, so progress is as verifiable as the outcome. We propose ProCredit, which turns this verified progress into credit: it reruns the acceptance checks after each turn, rewards the turn by its change in progress, and uses these rewards to assign credit both across attempts at the same task and across the turns within a trajectory. Starting from Qwen3.5 base models at three scales on AppWorld, ProCredit outperforms outcome-reward baselines and progress-based baselines in task completion rate at every scale on both test sets, exceeding the strongest outcome-reward baseline by 4.1 percentage points at 4B, and results in a second environment show the same direction of improvement. Ablations show that adding the final progress to the trajectory score alone does not improve performance: the gain comes from crediting progress to the turn where it occurs.
\end{abstract}

\maketitle
\vspace{3mm}

\section{Introduction}
\label{sec:introduction}

Language model agents tackle long-horizon tasks through dozens of operations across tools, applications, and services until the environment satisfies a user's request \citep{appworld,taubench}. We study tasks in which an agent gradually modifies the environment through tool calls and success depends on the final state. Consider a request to move a wake-up alarm earlier and disable the other alarms: acceptance consists of executable checks, each testing a concrete fact about the environment, such as whether the wake-up alarm has the requested time.

A common way to train such agents is reinforcement learning with verifiable rewards \citep{deepseekr1}: sample multiple trajectories for the same task, assign each a terminal outcome reward, and compare rewards within the group \citep{grpo,rloo,dapo}. It has been applied to search, tool use, and tasks spanning applications \citep{searchr1,toolrl,loop}. An outcome reward reduces an entire attempt to one number and discards three kinds of information. First, in groups with no successful trajectory, every advantage is zero, so the task contributes no gradient at that update. Second, in mixed groups, a failed trajectory that completes half the task receives the same negative advantage as one that completes none of it. Third, a turn that advances the task shares its advantage with a turn that only queries the environment. The first two losses concern how much progress is made; the third concerns when it is made (Figure~\ref{fig:problem}).

\begin{figure}[t]
    \centering
    \includegraphics[width=0.95\linewidth]{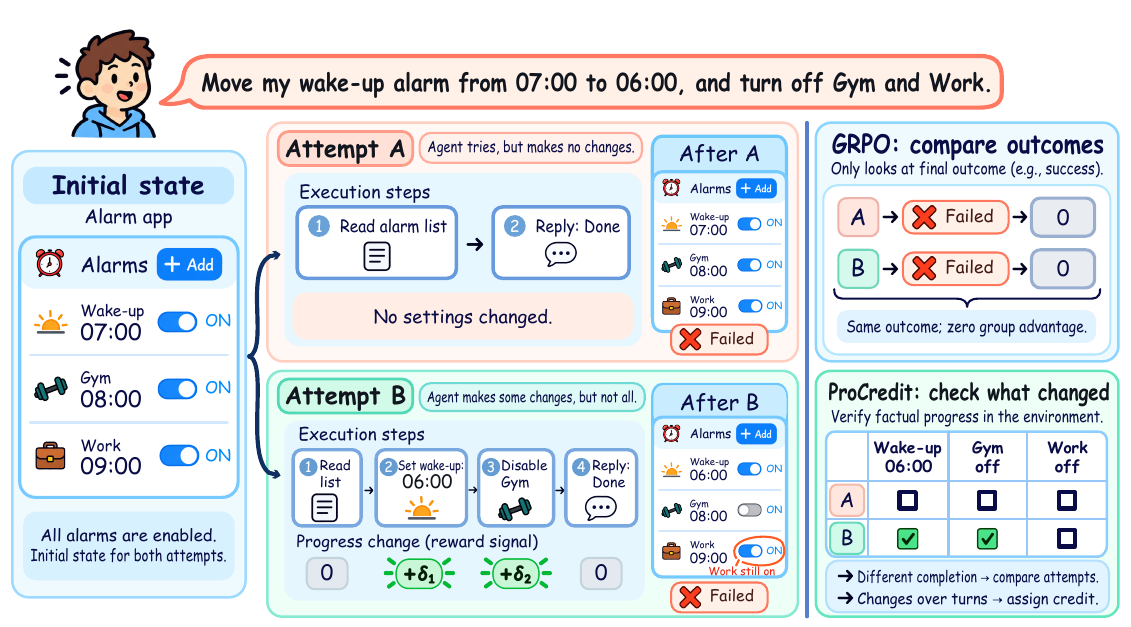}
    \caption{The same failed outcome hides different progress. Attempt A only reads the alarm list and declares completion; Attempt B moves the wake-up alarm earlier and disables Gym but leaves Work enabled. Both receive zero outcome reward and zero GRPO advantage; rerunning the checks after each turn separates them, with rewards $\delta_1$ and $\delta_2$ for changing the time and disabling Gym.}
    \label{fig:problem}
\end{figure}

Prior work recovers intermediate signals in two ways. One refines the unit of comparison from the trajectory to the step \citep{gigpo,salt}, but the signal still comes from final success alone, and two trajectories can only be compared at a step where they reach the same state. The other trains a separate model to supply the signal \citep{agentprm,toolverse}, which is estimated rather than verified in the environment. The acceptance checks that decide success can also run on intermediate states: progress is as verifiable as the outcome, and no separate model is needed.

We propose ProCredit; Figure~\ref{fig:method} shows its training procedure. After each turn during training, it reruns the task's acceptance checks, records the fraction passed as progress, and uses the change between consecutive turns as the turn's reward. Credit is assigned at two levels. Across attempts at the same task, trajectories are compared by their total score, so failed attempts are separated by completion; within a trajectory, turns are compared by their turn-level returns, so a turn that advances the task earns more credit than every turn after it. All-fail groups therefore still carry signal, and credit within a trajectory follows its own progress without requiring another trajectory to reach the same state. The procedure uses only the task's existing checks; progress is read only during training.

We train Qwen3.5 base models at three scales on AppWorld and evaluate on its two test sets, one with applications seen in training and one containing applications unseen in training. ProCredit achieves the highest task completion rate at every scale on both test sets; the direction holds on ToolSandbox, and function calling on BFCL does not decline. Two ablations locate the gain. Adding final progress to the trajectory score or spreading it over tokens does not improve at any scale, so the gain comes from crediting progress at the turn where it occurs, not only from how much is completed at the end. Discounting the turn-level return shortens trajectories and loses hard tasks along with them; undiscounted returns are better at all three scales.

\section{Related work}
\label{sec:related-work}

\paragraph{Reinforcement learning for long-horizon agentic tasks.}
Reinforcement learning allows language model agents to learn directly from environment interactions in search and tool use \citep{searchr1,toolrl}, text games and embodied tasks \citep{ragen,gigpo}, and tasks spanning applications in environments such as AppWorld \citep{loop,g2po,salt}. These methods train with relative advantages across trajectories sampled for the same task, with their main reward provided only at the end. We retain the same environment setting and optimization objective, and change where the reward is read.

\paragraph{Sources of intermediate signals.}
Intermediate signals can come from learned reward models \citep{agentprm,prints,istar,sparl,pair} or a post-hoc critic \citep{hcapo}, from the policy's own belief updates, as in the information-gain rewards of IGPO \citep{igpo}, from checklists scored by a language model, as in CM2 \citep{cm2}, from reference trajectories synthesized by language models, as in ToolVerse/TARA \citep{toolverse}, or from verifiers: VPR \citep{vpr} verifies individual intermediate actions. ProCredit belongs to the last category but reads the verifier differently: it reruns the full acceptance checks after every turn, defines progress as the fraction passed, and derives credit from the observed change between consecutive turns, without annotations, reward models, or reference trajectories. Because the signal is read from the environment state, a query that changes nothing earns nothing, whereas an information-gain reward pays for it.

\paragraph{Credit assignment in group-based reinforcement learning.}
GRPO \citep{grpo}, RLOO \citep{rloo}, and DAPO \citep{dapo} assign the same advantage to all turns of a trajectory. Advantages are zero in all-fail groups; DAPO addresses this through resampling, while ProGPO \citep{progpo} uses first-visit coverage as a fallback. Turn-level credit can come from hand-designed turn rewards, as in MT-GRPO \citep{mtgrpo}, which must be redesigned for each environment, or from a value function, as in Turn-PPO \citep{turnppo}, which like a reward model is a learned estimate. Another route refines the unit of comparison: GiGPO \citep{gigpo} and HGPO \citep{hgpo} group identical environment states across trajectories to compute step-level advantages, and G2PO and SALT propagate credit through state-transition graphs. These methods redistribute outcome rewards, and their step comparisons depend on states recurring across trajectories. ProCredit obtains turn rewards from the task's acceptance checks, which order the turns within a trajectory without coincident states across trajectories, and the same signal makes trajectories in all-fail groups comparable by completion.

\section{Method: ProCredit}
\label{sec:method}

ProCredit checks the environment state after each tool-execution turn during training, converts changes in progress into turn rewards, and uses these rewards at two levels: the total trajectory reward compares attempts at the same task, and returns from individual turns onward assign credit within each trajectory. The policy update uses the sum of the two advantages, as shown in Figure~\ref{fig:method}.

\begin{figure}[t]
    \centering
    \includegraphics[width=0.95\linewidth]{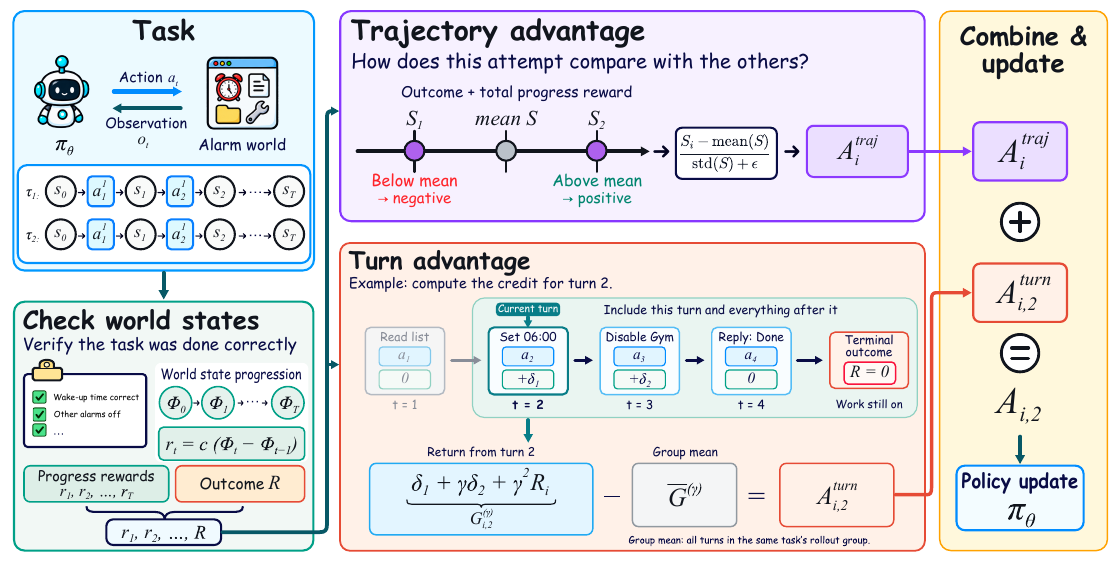}
    \caption{Overview of ProCredit. Each turn's change in progress becomes its reward; the rewards enter a trajectory-level advantage that compares attempts at the same task and a turn-level advantage that centers each turn's onward return. The lower panel traces Attempt B from Figure~\ref{fig:problem}: the turn that changes the time receives exactly $\delta_1$ more credit than the turn that disables Gym.}
    \label{fig:method}
\end{figure}

\subsection{Preliminaries}
\label{sec:preliminaries}

A task consists of repeated turns of action generation, tool execution, and observation. Let $g$ denote the task and $h_t$ the history visible to the model before turn $t$. The policy samples an action $a_t\sim\pi_\theta(\cdot\mid h_t,g)$, after which the environment enters state $s_t$ and returns observation $o_t$. The model receives observations rather than the full state, so these tasks are typically modeled as partially observable Markov decision processes \citep{loop,agenticrlsurvey}; the state $s_t$ is used for verification only during training. An attempt ends when the model declares completion or reaches a limit of $h$ turns; its actual length is $T$, and the environment then assigns an outcome reward $R\in\{0,1\}$. We denote the group of trajectories sampled for task $g$ by $\mathcal G_g$, index trajectories by $i$ and other members of the group by $j$, and omit $i$ when discussing a single trajectory.

Group relative policy optimization (GRPO) standardizes outcome rewards within the group to obtain a trajectory advantage:
\begin{equation}
    A_i = \frac{R_i-\operatorname{mean}_{j\in\mathcal G_g} R_j}
    {\operatorname{std}_{j\in\mathcal G_g} R_j+\epsilon}.
    \label{eq:grpo-advantage}
\end{equation}
Let $a_{i,t,k}$ be token $k$ of the action at turn $t$ in trajectory $i$, with importance ratio $\rho_{i,t,k}=\pi_\theta(a_{i,t,k}\mid h_{i,t},a_{i,t,<k},g)/\pi_{\theta_{\mathrm{old}}}(a_{i,t,k}\mid h_{i,t},a_{i,t,<k},g)$. The policy is updated with a clipped objective \citep{ppo}, using advantage $A_{i,t}$ at turn $t$:
\begin{equation}
    \mathcal J(\theta) = \mathbb E\Bigl[
    \tfrac{1}{\sum_{i,t}|a_{i,t}|}
    \textstyle\sum_{i,t}\sum_{k=1}^{|a_{i,t}|}
    \min\bigl(\rho_{i,t,k}A_{i,t},\
    \operatorname{clip}(\rho_{i,t,k},1-\varepsilon,1+\varepsilon)A_{i,t}\bigr)
    \Bigr].
\label{eq:policy-objective}
\end{equation}
GRPO uses the same advantage for every turn of a trajectory, $A_{i,t}=A_i$. Following LOOP \citep{loop}, we omit a KL term; the loss is averaged over all generated tokens and excludes tool responses.

\subsection{Verifiable task progress}
\label{sec:progress}

Task acceptance consists of executable checks, each testing a concrete fact about the environment. Let $\mathcal C_g$ be the check set, with size $K_g>0$, and let $C_{g,m}(s)\in\{0,1\}$ equal one if check $m$ passes in state $s$. We define task progress as the fraction of checks passed:
\begin{equation}
    \Phi_g(s) = \frac{1}{K_g}\sum_{m=1}^{K_g} C_{g,m}(s).
    \label{eq:progress}
\end{equation}
Progress after turn $t$ is $\Phi_t=\Phi_g(s_t)\in[0,1]$, and $\Phi_0$ is progress before any action. In our environments, we include only checks that fail in the initial state, so $\Phi_0=0$.

Progress depends on the environment state, rather than whether a tool call returns successfully: a query leaves progress unchanged, a modification raises it only if more acceptance checks pass, and an incorrect modification can invalidate previously passed checks and reduce it. In Figure~\ref{fig:problem}, reading the list leaves progress at zero, changing the time and disabling Gym each pass one check, and Work remains enabled, so the outcome reward $R$, recorded separately from progress, is zero. Progress is read only during training.

\subsection{Progress rewards and two levels of credit}
\label{sec:credit}

\paragraph{Progress rewards.}
The reward at turn $t$ is the change in progress between consecutive states, scaled by $c>0$:
\begin{equation}
    r_t = c\,(\Phi_t-\Phi_{t-1}).
    \label{eq:progress-reward}
\end{equation}
Turns that advance the task receive positive reward, turns that undo progress negative reward, and turns that leave the checks unchanged, including queries, zero. The trajectory score combines the outcome reward with the sum of progress rewards:
\begin{equation}
    S = R+\sum_{t=1}^{T}r_t = R+c\,(\Phi_T-\Phi_0).
    \label{eq:trajectory-score}
\end{equation}
The score depends only on initial and final progress, as in potential-based reward shaping \citep{ng1999} with potential $c\,\Phi_g(s)$. Unlike \citet{ng1999}, we do not set terminal potentials to zero, so the score retains final completion. Successful trajectories still score strictly higher than any failed trajectory, and the set of optimal policies is unchanged when the task is solvable (Appendix~\ref{app:score-properties}), so progress refines the ranking only among attempts with the same outcome; the turn-level term below uses when that progress was made.

\paragraph{Trajectory-level advantage.}
We replace $R_i$ in GRPO's within-group standardization with the trajectory score $S_i$:
\begin{equation}
    A_i^{\mathrm{traj}} =
    \frac{S_i-\operatorname{mean}_{j\in\mathcal G_g} S_j}
    {\operatorname{std}_{j\in\mathcal G_g} S_j+\epsilon}.
    \label{eq:trajectory-advantage}
\end{equation}
This term treats all turns of a trajectory alike and expresses how the attempt compares with the other attempts at the same task.

\paragraph{Turn-level advantage.}
Credit at turn $t$ uses the discounted return from that turn onward:
\begin{equation}
    G_{i,t}^{(\gamma)} = \gamma^{T_i-t}R_i
    +\sum_{u=t}^{T_i}\gamma^{u-t}r_{i,u}.
    \label{eq:discounted-return}
\end{equation}
ProCredit uses $\gamma=1$. An attempt has at most $h$ turns and bounded rewards, so its undiscounted return is finite and does not require discounting to converge. The return has the closed form
\begin{equation}
    G_{i,t} = R_i+c\,(\Phi_{i,T_i}-\Phi_{i,t-1}),
    \label{eq:undiscounted-return}
\end{equation}
which is the outcome reward plus the progress still to be earned from this turn on. We examine the effect of discounting in Section~\ref{sec:discount}. Let $\mathcal U_g$ contain all turns of all trajectories for task $g$. We center turn returns by their mean over $\mathcal U_g$, without dividing by a standard deviation:
\begin{equation}
    A_{i,t}^{\mathrm{turn}} = G_{i,t}
    -\operatorname{mean}_{(j,u)\in\mathcal U_g}G_{j,u}.
    \label{eq:turn-advantage}
\end{equation}
Appendix~\ref{app:pooling} explains why turns with different histories can share this reference value. The final advantage is the sum of the two terms:
\begin{equation}
    A_{i,t} = A_i^{\mathrm{traj}}+A_{i,t}^{\mathrm{turn}}.
    \label{eq:combined-advantage}
\end{equation}

\paragraph{Roles of the two terms.}
The trajectory-level term determines whether an attempt is reinforced or suppressed as a whole; the turn-level term adds a tilt within the trajectory. When $\gamma=1$, the difference in credit between adjacent turns is exactly the progress reward of the earlier turn:
\begin{equation}
    A_{i,t}-A_{i,t+1} = r_{i,t}.
    \label{eq:adjacent-credit}
\end{equation}
A turn that advances the task receives more credit than the next turn by exactly the reward for its own progress, and a query that leaves the state unchanged receives the same credit as the next turn, as in the lower panel of Figure~\ref{fig:method}. This tilt depends only on progress differences, not on success or failure, so turns can still receive different credit in groups where every trajectory succeeds (Appendix~\ref{app:credit-example} gives a complete numerical example). The policy is then updated with the objective in Equation~\eqref{eq:policy-objective}, using $A_{i,t}$ in place of the shared $A_i$; all tokens generated within a turn share that turn's advantage.

\section{Experiments}
\label{sec:experiments}

\begin{table}[tbp]
\centering
\caption{TGC and SGC (\%) on AppWorld Test-Normal and Test-Challenge.}
\label{tab:main}
\normalsize
\renewcommand{\arraystretch}{1.08}
\setlength{\tabcolsep}{3pt}
\begin{tabular*}{\linewidth}{@{\extracolsep{\fill}}lcccccc@{}}
\toprule
& \multicolumn{6}{c}{\textbf{Test-Normal (N)}} \\
& \multicolumn{2}{c}{\textbf{4B}} & \multicolumn{2}{c}{\textbf{9B}} & \multicolumn{2}{c}{\textbf{35B}} \\
\cmidrule(lr){2-3}\cmidrule(lr){4-5}\cmidrule(lr){6-7}
\textbf{Method} & \textbf{TGC $\uparrow$} & \textbf{SGC $\uparrow$} & \textbf{TGC $\uparrow$} & \textbf{SGC $\uparrow$} & \textbf{TGC $\uparrow$} & \textbf{SGC $\uparrow$} \\
\midrule
Base model & \scoreonly{17.78} & \scoreonly{4.24} & \scoreonly{19.49} & \scoreonly{4.91} & \scoreonly{37.28} & \scoreonly{12.28} \\
SFT & \scoreonly{19.87} & \scoreonly{6.47} & \scoreonly{31.70} & \scoreonly{6.70} & \scoreonly{49.78} & \scoreonly{21.43} \\
\midrule
\rowcolor{tablegroup}
\multicolumn{7}{c}{\emph{Outcome rewards only}} \\
GRPO & \score{66.32}{1.27} & \score{42.34}{3.17} & \score{72.79}{0.96} & \score{51.64}{1.95} & \score{75.00}{0.61} & \score{57.37}{0.85} \\
RLOO & \score{65.87}{3.76} & \score{42.04}{5.36} & \score{72.72}{3.10} & \score{51.26}{4.22} & \secondscore{78.45}{0.76} & \score{58.53}{2.59} \\
DAPO & \secondscore{67.46}{5.62} & \secondscore{43.38}{6.19} & \secondscore{76.96}{4.53} & \secondscore{56.70}{3.98} & \score{77.90}{1.24} & \bestscore{61.23}{2.46} \\
SALT & \score{65.60}{3.92} & \score{40.25}{6.26} & \score{74.68}{2.42} & \score{54.61}{4.22} & \score{78.27}{2.45} & \score{58.31}{3.38} \\
\midrule
\rowcolor{tablegroup}
\multicolumn{7}{c}{\emph{Outcome rewards + progress}} \\
GRPO-$\Phi$ & \score{65.40}{4.12} & \score{41.67}{6.25} & \score{74.78}{2.54} & \score{54.24}{4.68} & \score{76.41}{1.40} & \score{58.48}{2.70} \\
GiGPO & \score{64.53}{3.78} & \score{42.26}{6.56} & \score{69.77}{2.21} & \score{47.25}{3.21} & \score{56.99}{3.28} & \score{35.49}{2.88} \\
\textbf{ProCredit} & \bestscore{71.53}{0.48} & \bestscore{48.88}{0.45} & \bestscore{78.15}{2.61} & \bestscore{59.82}{3.95} & \bestscore{78.65}{0.85} & \secondscore{59.60}{1.35} \\
\midrule
& \multicolumn{6}{c}{\textbf{Test-Challenge (C)}} \\
& \multicolumn{2}{c}{\textbf{4B}} & \multicolumn{2}{c}{\textbf{9B}} & \multicolumn{2}{c}{\textbf{35B}} \\
\cmidrule(lr){2-3}\cmidrule(lr){4-5}\cmidrule(lr){6-7}
\textbf{Method} & \textbf{TGC $\uparrow$} & \textbf{SGC $\uparrow$} & \textbf{TGC $\uparrow$} & \textbf{SGC $\uparrow$} & \textbf{TGC $\uparrow$} & \textbf{SGC $\uparrow$} \\
\midrule
Base model & \scoreonly{10.67} & \scoreonly{1.80} & \scoreonly{12.59} & \scoreonly{3.78} & \scoreonly{27.82} & \scoreonly{8.81} \\
SFT & \scoreonly{9.95} & \scoreonly{2.52} & \scoreonly{20.14} & \scoreonly{5.76} & \scoreonly{31.59} & \scoreonly{8.63} \\
\midrule
\rowcolor{tablegroup}
\multicolumn{7}{c}{\emph{Outcome rewards only}} \\
GRPO & \score{46.50}{0.66} & \score{24.46}{0.78} & \score{60.77}{2.35} & \score{37.23}{2.03} & \secondscore{66.73}{3.66} & \secondscore{43.35}{2.96} \\
RLOO & \score{48.28}{4.55} & \score{26.20}{3.06} & \score{60.13}{4.87} & \score{35.61}{6.08} & \score{63.09}{0.52} & \score{41.49}{2.37} \\
DAPO & \secondscore{49.06}{7.43} & \secondscore{27.94}{7.11} & \secondscore{63.13}{2.39} & \secondscore{39.93}{2.12} & \score{64.31}{1.58} & \score{42.63}{2.59} \\
SALT & \score{48.50}{3.28} & \score{25.66}{2.22} & \score{58.73}{2.52} & \score{35.73}{3.40} & \score{65.21}{4.31} & \score{43.11}{4.28} \\
\midrule
\rowcolor{tablegroup}
\multicolumn{7}{c}{\emph{Outcome rewards + progress}} \\
GRPO-$\Phi$ & \score{47.82}{2.38} & \score{25.48}{2.44} & \score{62.15}{2.99} & \score{38.85}{2.77} & \score{65.05}{1.03} & \secondscore{43.35}{1.65} \\
GiGPO & \score{43.69}{4.33} & \score{22.96}{3.54} & \score{57.89}{2.43} & \score{32.67}{3.10} & \score{41.43}{3.27} & \score{17.99}{3.53} \\
\textbf{ProCredit} & \bestscore{52.96}{0.50} & \bestscore{29.26}{0.45} & \bestscore{63.85}{0.45} & \bestscore{41.07}{0.73} & \bestscore{68.11}{3.95} & \bestscore{45.68}{4.40} \\
\bottomrule
\end{tabular*}
\end{table}

\subsection{Experimental setup}
\label{sec:setup}

\paragraph{Environments.}
AppWorld requires agents to complete tasks across applications through multiple turns of tool use, with executable unit tests determining task completion. We train on 90 tasks, use a development set of 57 tasks, and evaluate on test-normal (168 tasks) and test-challenge (417 tasks); the latter includes applications unseen during training. We report task goal completion (TGC) and scenario goal completion (SGC); a scenario is complete only when all its tasks are complete. ToolSandbox~\citep{toolsandbox} is a conversational tool environment with state-modifying tools and milestone-based evaluation. Of its 101 tasks, we use 52 for training and 21 for evaluation; the remaining tasks are excluded because the base model completes all of them or they have fewer than two milestones.

\paragraph{Models and training.}
We use Qwen3.5~\citep{qwen35} models: Qwen3.5-4B, Qwen3.5-9B, and Qwen3.5-35B-A3B (abbreviated as 35B). Reinforcement learning starts from the base models, and comparisons are made within each model size. ProCredit uses $c=0.5$ and $\gamma=1$. For each reinforcement learning method, we report the mean and standard deviation over three independent training runs.

\paragraph{Comparison methods.}
SFT uses trajectories generated on AppWorld. Outcome-only methods include GRPO~\citep{grpo}, RLOO~\citep{rloo}, DAPO~\citep{dapo}, and SALT~\citep{salt}. Methods that use progress include GRPO-$\Phi$, GiGPO~\citep{gigpo}, and ProCredit. GRPO-$\Phi$ adds final progress to the trajectory score, giving the same score as ProCredit, but uses only trajectory-level advantages. GiGPO uses the same progress rewards as ProCredit and groups states by the set of passed checks; the main table uses its default discount factor, $\gamma=0.95$. SALT also defines state equivalence classes by the set of passed checks.

\subsection{Main results}
\label{sec:main-results}

\paragraph{Overall performance.}
Table~\ref{tab:main} groups methods by their reward source and reports results on both test sets. ProCredit achieves the highest TGC at all three model sizes on both test sets, and the highest SGC in every column except 35B test-normal, where DAPO is slightly higher. Its margin over the strongest outcome-only method is largest at 4B, 4.1 and 3.9 TGC points on test-normal and test-challenge with the smallest standard deviations in those columns, and narrows to 1 to 3 points at 9B; at 35B, RLOO, DAPO, SALT, and ProCredit all score around 78 in test-normal TGC, and ProCredit leads by 1 to 2 points on test-challenge. GiGPO uses the same progress rewards yet scores below GRPO at 9B and 35B; Section~\ref{sec:discount} traces this to its default discount factor.

\paragraph{Same progress, different credit.}
GRPO-$\Phi$ and ProCredit assign the same score to a given trajectory and differ only in the turn-level term. ProCredit scores higher at all three model sizes on both test sets: its test-normal TGC is higher by 6.1, 3.4, and 2.2 points at 4B, 9B, and 35B, and its test-challenge TGC by 5.1, 1.7, and 3.1 points. GRPO-$\Phi$ itself does not consistently outperform GRPO. Adding progress to the trajectory score alone is insufficient; the difference comes from when progress is credited. Appendix~\ref{app:learning} plots the training and development curves of the three methods.

\begin{table}[htbp]
\centering
\caption{Success rates (\%) on ToolSandbox.}
\label{tab:toolsandbox}
\small
\setlength{\tabcolsep}{10pt}
\begin{tabular}{lccc}
\toprule
\textbf{Model} & \textbf{Base model} & \textbf{GRPO} & \textbf{ProCredit} \\
\midrule
4B & \scoreonly{63.1} & \score{81.0}{1.0} & \bestscore{84.5}{0.5} \\
9B & \scoreonly{72.0} & \score{85.7}{1.8} & \bestscore{89.3}{1.9} \\
35B & \scoreonly{79.8} & \score{89.9}{2.6} & \bestscore{92.3}{2.9} \\
\bottomrule
\end{tabular}
\end{table}

\paragraph{A second environment.}
Table~\ref{tab:toolsandbox} reports success rates on ToolSandbox. ProCredit outperforms GRPO at each model size. On the single-turn and multi-turn function-calling categories evaluated in BFCL~\citep{bfcl}, ProCredit's scores after training remain close to those of the base models; Appendix~\ref{app:bfcl} reports the full results.

\subsection{Core ablations}
\label{sec:ablations}

Table~\ref{tab:granularity} reports TGC and SGC on test-normal: the upper block varies the credit unit with undiscounted returns, and the lower block discounts returns with $\gamma=0.95$. Rows with the same configuration as the main table reuse its results.

\begin{table}[tbp]
\centering
\caption{Ablations on Test-Normal (\%): the same progress signal credited at different granularities, with undiscounted returns (upper block) and with $\gamma=0.95$ (lower block).}
\label{tab:granularity}
\small
\setlength{\tabcolsep}{3pt}
\begin{tabular*}{\linewidth}{@{\extracolsep{\fill}}llcccccc@{}}
\toprule
& & \multicolumn{2}{c}{\textbf{4B}} & \multicolumn{2}{c}{\textbf{9B}} & \multicolumn{2}{c}{\textbf{35B}} \\
\cmidrule(lr){3-4}\cmidrule(lr){5-6}\cmidrule(lr){7-8}
\textbf{Method} & \textbf{Credit unit} & \textbf{TGC $\uparrow$} & \textbf{SGC $\uparrow$} & \textbf{TGC $\uparrow$} & \textbf{SGC $\uparrow$} & \textbf{TGC $\uparrow$} & \textbf{SGC $\uparrow$} \\
\midrule
\rowcolor{tablegroup}
\multicolumn{8}{c}{\emph{Credit granularity, undiscounted returns}} \\
GRPO & Trajectory & \score{66.32}{1.27} & \score{42.34}{3.17} & \score{72.79}{0.96} & \score{51.64}{1.95} & \score{75.00}{0.61} & \score{57.37}{0.85} \\
GRPO-$\Phi$ & Trajectory & \score{65.40}{4.12} & \score{41.67}{6.25} & \score{74.78}{2.54} & \score{54.24}{4.68} & \score{76.41}{1.40} & \score{58.48}{2.70} \\
GiGPO & Milestone & \score{64.66}{5.09} & \score{39.73}{6.61} & \secondscore{75.30}{1.95} & \secondscore{54.84}{3.23} & \score{74.40}{5.17} & \score{56.25}{6.22} \\
ProCredit (token) & Token & \score{63.19}{3.69} & \score{38.54}{4.72} & \score{74.60}{4.84} & \score{51.86}{8.97} & \secondscore{78.47}{3.59} & \bestscore{61.61}{5.03} \\
\textbf{ProCredit (turn)} & Turn & \bestscore{71.53}{0.48} & \bestscore{48.88}{0.45} & \bestscore{78.15}{2.61} & \bestscore{59.82}{3.95} & \bestscore{78.65}{0.85} & \secondscore{59.60}{1.35} \\
\midrule
\rowcolor{tablegroup}
\multicolumn{8}{c}{\emph{Discounted returns, $\gamma=0.95$}} \\
GiGPO & Milestone & \score{64.53}{3.78} & \score{42.26}{6.56} & \score{69.77}{2.21} & \score{47.25}{3.21} & \score{56.99}{3.28} & \score{35.49}{2.88} \\
ProCredit (turn) & Turn & \secondscore{68.48}{2.53} & \secondscore{45.24}{5.61} & \score{68.20}{5.66} & \score{45.98}{8.65} & \score{65.13}{7.37} & \score{44.64}{8.12} \\
\bottomrule
\end{tabular*}
\end{table}

\subsubsection{Rewards and credit assignment}
\label{sec:granularity}

\paragraph{Credit granularity.}
The upper block of Table~\ref{tab:granularity} assigns the same progress signal at different granularities. GRPO and GRPO-$\Phi$ credit entire trajectories, without and with final completion in the score. GiGPO with $\gamma=1$ groups turns by state anchors and assigns credit at milestones. ProCredit (turn) assigns credit per turn. ProCredit (token) distributes each turn's progress reward linearly across token positions: for token $k$ among the turn's $n$ tokens, the return includes only the remaining portion of that turn's progress reward, giving $G_{i,t}-r_{i,t}\,k/n$, with grouping and centering unchanged. This variant introduces no token-level environmental feedback; it only dilutes the same reward.

\paragraph{Turn-level credit works best.}
ProCredit (turn) exceeds ProCredit (token) in TGC by 8.3 points at 4B and 3.6 points at 9B, with larger gaps in SGC. The token variant falls below GRPO at 4B, exceeds it at 9B with much larger variation across runs, and matches the turn variant in TGC at 35B, where its slightly higher mean SGC lies within its own variability. GiGPO with $\gamma=1$ uses the same progress rewards but scores below ProCredit (turn) at all three model sizes; their turn-level advantages differ only by an anchor-group mean term, as shown in Appendix~\ref{app:pooling}.

\subsubsection{Discount factor}
\label{sec:discount}

\paragraph{The discount factor has the largest individual effect.}
GiGPO adopts $\gamma=0.95$ without an ablation, whereas GEM~\citep{gem} and a practical guide to multi-turn agent reinforcement learning~\citep{agenticrlguide} argue that $\gamma<1$ is incompatible with group-based multi-turn training. In turn-level credit assignment, $\gamma$ sets how much credit for early turns decays with each subsequent turn, and hence the model's preference over trajectory length. The lower block of Table~\ref{tab:granularity} discounts both ProCredit and GiGPO. Changing ProCredit from $\gamma=0.95$ to $\gamma=1$ increases TGC by approximately 3, 10, and 13 points at 4B, 9B, and 35B; the gap grows with model size. GiGPO also scores higher without discounting at 9B and 35B, which accounts for most of its deficit in Table~\ref{tab:main}, although it remains below ProCredit at all three model sizes. Appendix~\ref{app:group-variants} examines discounting only selected types of sampled groups.

\paragraph{Why discounting reduces performance.}
With $\gamma<1$, the outcome-reward component of a turn's credit is $\gamma^{T-t}R$: a turn at the same position receives strictly less credit in a longer trajectory, independently of the task. Within an all-success group for the same task, the shortest trajectory therefore almost always receives the highest credit, and the model learns to use fewer turns. After training, ProCredit with $\gamma=0.95$ produces trajectories that are 9\% and 20\% shorter on average than GRPO at 4B and 9B, whereas $\gamma=1$ yields lengths close to GRPO. The shorter the trajectories are pushed, the more accuracy is lost, and the losses concentrate on medium and hard tasks, whose required steps discounting removes. Discounting thus adds a \emph{length tax}, a reduction in earlier credit caused by additional turns, and ProCredit uses $\gamma=1$. Appendix~\ref{app:length-tax} states this tax as a proposition and reports credit statistics from actual training trajectories and results by task difficulty.

\subsection{Analysis}
\label{sec:analysis}

\paragraph{Progress differences among failed attempts.}
We call a sampled group all-success, mixed, or all-fail according to whether all, some, or none of its trajectories succeed. In ProCredit's training trajectories, 54.9\% and 38.6\% of all-fail groups at 4B and 9B, respectively, contain distinguishable progress differences. These groups provide no gradient under outcome-only rewards but still provide learning signals under ProCredit. Figure~\ref{fig:failed-progress} compares the final progress of failed trajectories under GRPO, GRPO-$\Phi$, and ProCredit in two group types. In mixed groups, both progress-based methods produce failed trajectories with higher completion than GRPO early in training; GRPO-$\Phi$ declines during the middle and later stages, while ProCredit remains higher. In all-fail groups, both progress-based methods exceed GRPO later in training and differ little from each other.

\begin{figure}[t]
\centering
\includegraphics[width=0.9\linewidth]{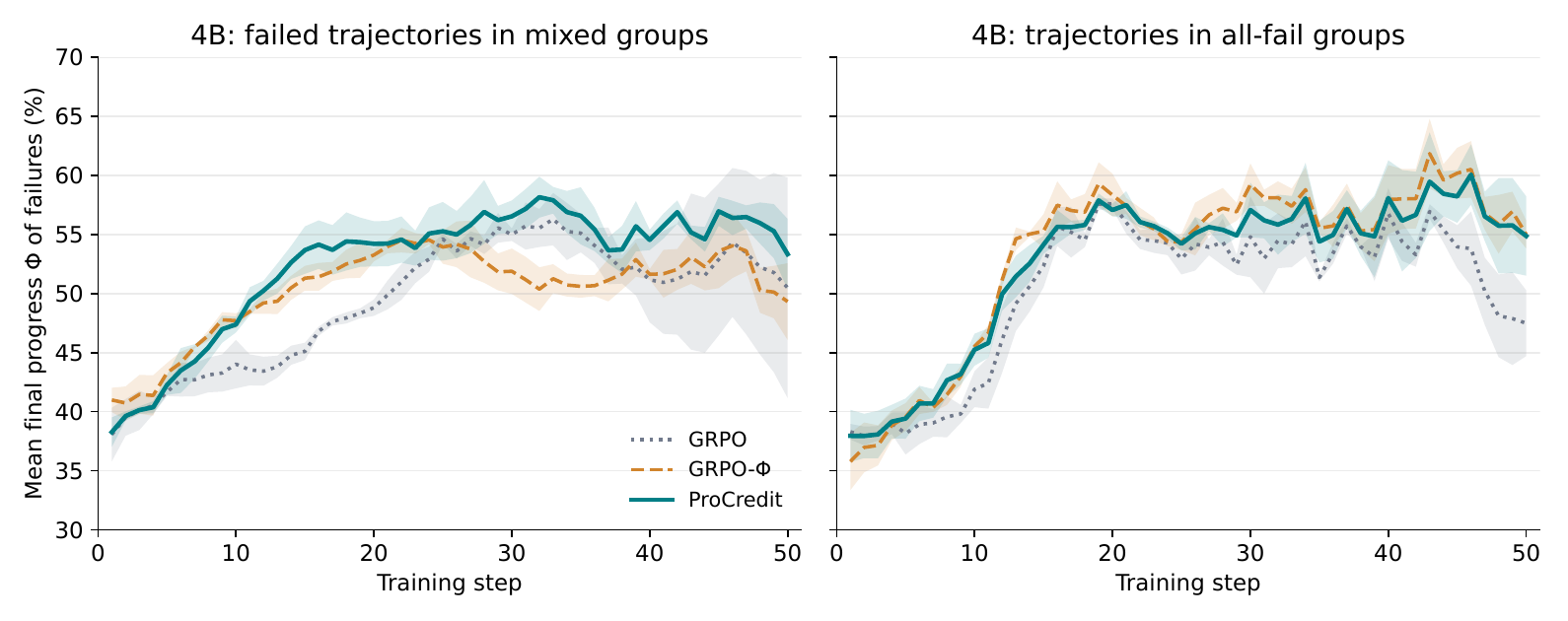}
\caption{Final progress $\Phi_T$ of failed trajectories during 4B training, in mixed groups (left) and all-fail groups (right).}
\label{fig:failed-progress}
\end{figure}

\paragraph{Output entropy after tool feedback.}
The trained models respond more distinctly to progress and error feedback than GRPO: output entropy is lower after turns that advance the task, while the entropy increase after errors is preserved. Appendix~\ref{app:entropy} provides the entropy curves, computation details, and results at all three model sizes.

\section{Discussion}
\label{sec:discussion}

ProCredit requires task progress to be measurable at intermediate states, which holds whenever the environment state is accessible and the acceptance checks can be rerun. AppWorld's unit tests and ToolSandbox's milestones provide examples; test cases for coding tasks and tasks evaluated by their final database state also fit this setting, and other environments can use an intermediate-state verifier. The checks also set the granularity of progress: more checks passed across more turns let turn-level credit distinguish more turns, whereas with one or two checks, or all checks passed on the final turn, the turn-level term is constant within a trajectory and the method reduces to a GRPO-style comparison of trajectory scores.

Progress must come from the environment rather than the policy's own predictions: when a model predicts its own dense signal and is rewarded for it, group normalization lets the policy improve that prediction without increasing task success~\citep{darkroom}. ProCredit's progress is determined by the environment state, so the policy can obtain it only by changing that state.

The gains narrow at larger model sizes, consistent with the information supplied by progress: larger models produce fewer all-fail groups and failed attempts, leaving less missing information for progress to recover. In harder environments, where mixed and all-fail groups are as common for a larger model as they are for the 4B model here, progress has correspondingly more to supply. Our current progress measure is the fraction of passed checks, treating checks as equally weighted and unordered; future work will extend ProCredit to larger models and harder environments, and adapt progress measurement to those where a verifier decomposes final acceptance into intermediate checks.

\section{Conclusion}
\label{sec:conclusion}

Outcome rewards reduce a long-horizon attempt to one number and discard how much progress was made and when. ProCredit runs the acceptance checks on intermediate states during training, reads progress from the environment state, and uses its change to assign trajectory-level and turn-level credit. On AppWorld it achieves the highest task completion rate at three model sizes on both test sets, including applications unseen in training. The benefit comes from crediting progress to the turn where it occurs: adding progress only to trajectory scores or moving to token-level credit does not improve performance, while discounting credit for early turns reduces it.

\FloatBarrier
\bibliography{references}
\bibliographystyle{plainnat}

\clearpage
\appendix
\raggedbottom
\let\appendixsectionorig\section
\renewcommand{\section}{\FloatBarrier\appendixsectionorig}
\renewcommand{\thetable}{\Alph{section}\arabic{table}}
\renewcommand{\thefigure}{\Alph{section}\arabic{figure}}
\renewcommand{\theHtable}{\Alph{section}.\arabic{table}}
\renewcommand{\theHfigure}{\Alph{section}.\arabic{figure}}

\section{Output entropy after tool feedback}
\label{app:entropy}
\setcounter{table}{0}
\setcounter{figure}{0}

\paragraph{Computation and comparison.}
We compute full-vocabulary Shannon entropy of the model's output after tool feedback at temperature 1, in nats, over the first 120 tokens. We classify feedback into tool errors, progress without errors, and unchanged progress without errors; we call the last category ordinary turns. Errors take precedence in classification. First turns, progress regressions, and records with unknown status are excluded from ordinary turns. We retain only untruncated turns with at least 120 output tokens. We average curves within trajectories, within tasks, and across tasks, then give equal weight to the three seeds. Shading shows the sample standard deviation across seeds. For paired differences, we first subtract ordinary-turn entropy from feedback-turn entropy within the same trajectory, then aggregate over the beginning (tokens 1 to 20), middle (21 to 60), and end (61 to 120). This statistic differs from subtracting the two overall mean curves directly.

\paragraph{Overall entropy and feedback differences at 4B.}
In Figure~\ref{fig:entropy-4b}, both training methods reduce ordinary-turn entropy in the middle segment, from 0.57 before training to 0.24 for GRPO and 0.37 for ProCredit. Meanwhile, the middle-segment entropy difference between progress and ordinary turns is $-0.093\pm0.014$ for ProCredit and $-0.025\pm0.019$ for GRPO. All three ProCredit seeds have lower values than all three GRPO seeds. The beginning-segment difference between error and ordinary turns is $+0.235\pm0.034$ before training, $+0.245\pm0.027$ for ProCredit, and $+0.129\pm0.068$ for GRPO. ProCredit preserves the entropy increase after errors and widens the middle-segment difference between progress and ordinary turns.

\begin{figure}[!htbp]
\centering
\includegraphics[width=0.7\linewidth]{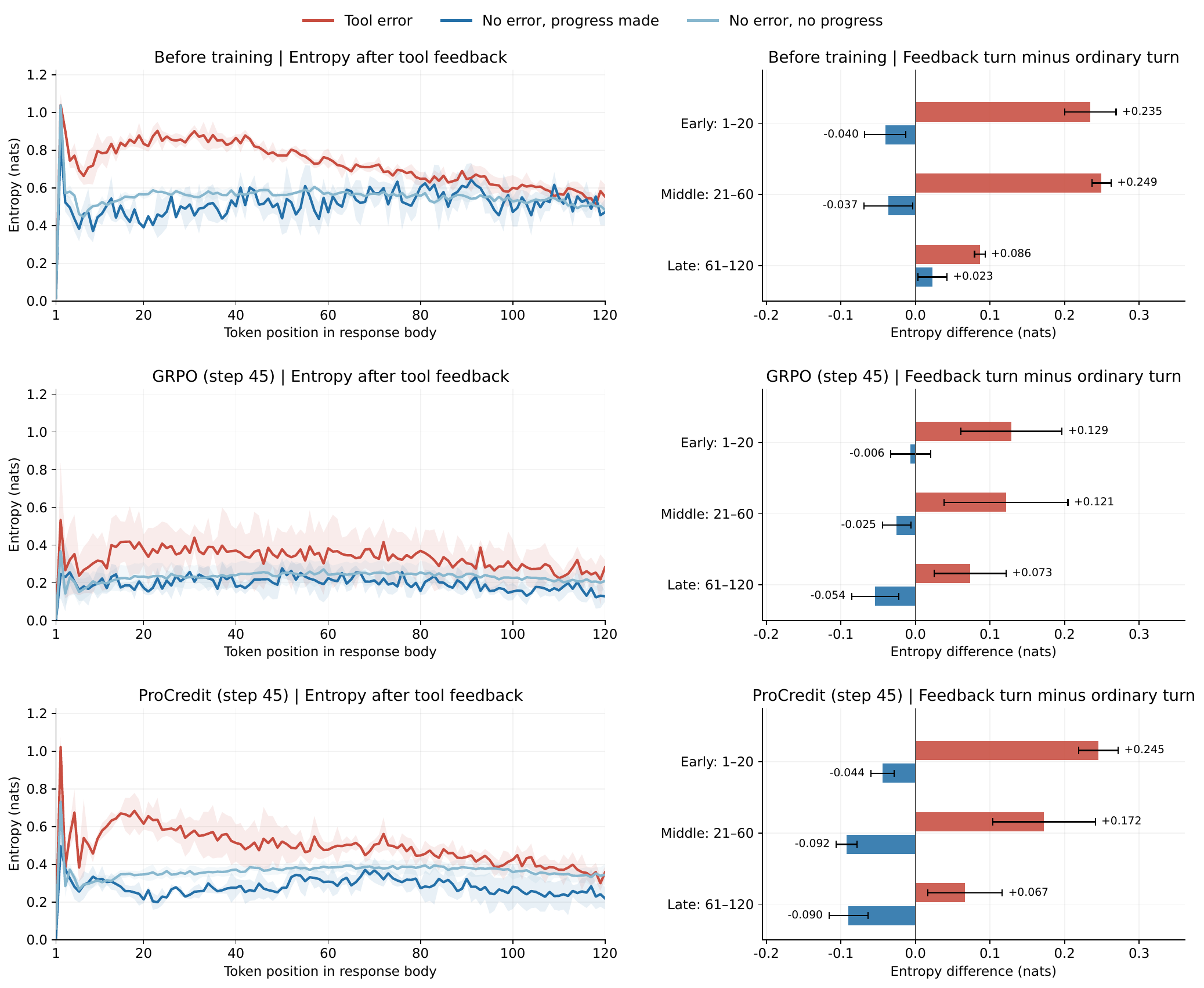}
\caption{Output entropy after tool feedback for 4B before training, after GRPO, and after ProCredit.}
\label{fig:entropy-4b}
\end{figure}

\paragraph{Feedback differences at 9B depend on output position.}
In Figure~\ref{fig:entropy-9b}, ProCredit's error curve starts above the ordinary-turn curve and gradually approaches it; its progress curve is lower in the middle segment. Within-trajectory paired results show the same change. The middle-segment progress-minus-ordinary difference is $-0.062\pm0.011$ for ProCredit and $-0.037\pm0.041$ for GRPO. The end-segment error-minus-ordinary differences are $-0.008\pm0.021$ and $+0.065\pm0.040$, respectively. At 9B, the entropy increase after errors is therefore concentrated at earlier positions and approaches ordinary-turn levels toward the end.

\begin{figure}[!htbp]
\centering
\includegraphics[width=0.7\linewidth]{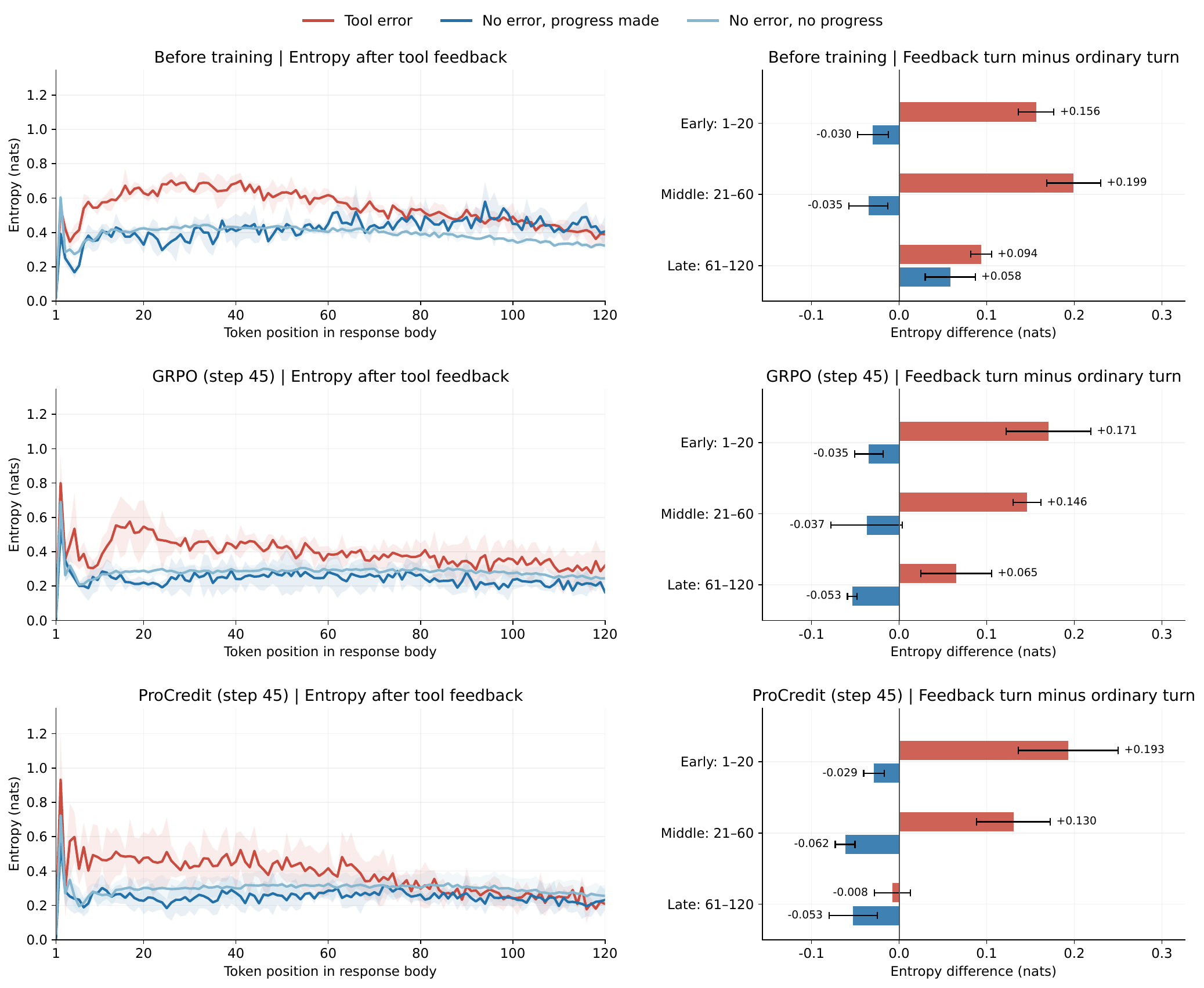}
\caption{Output entropy after tool feedback for 9B before training, after GRPO, and after ProCredit.}
\label{fig:entropy-9b}
\end{figure}

\paragraph{35B also distinguishes feedback types.}
In Figure~\ref{fig:entropy-35b}, ProCredit's progress curve lies below the ordinary-turn curve in the middle segment; entropy after errors rises early and then falls. The middle-segment progress-minus-ordinary difference changes from $-0.029\pm0.022$ before training to $-0.080\pm0.019$ under ProCredit. The end-segment error-minus-ordinary difference changes from $+0.080\pm0.021$ to $+0.017\pm0.010$. Output distributions continue to differ across feedback types, with the magnitude and duration of these differences depending on model size and training method.

\begin{figure}[!htbp]
\centering
\includegraphics[width=0.7\linewidth]{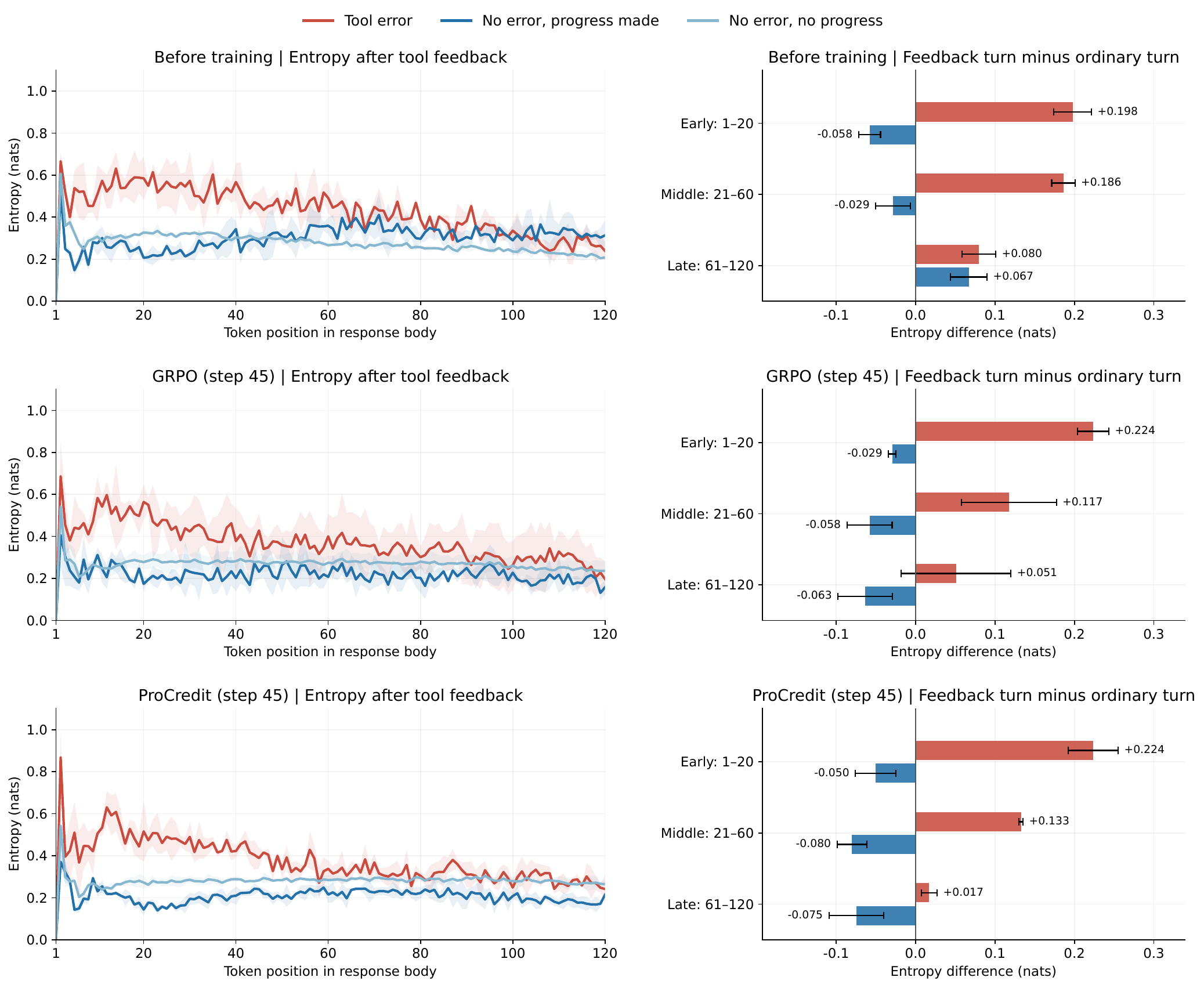}
\caption{Output entropy after tool feedback for 35B before training, after GRPO, and after ProCredit.}
\label{fig:entropy-35b}
\end{figure}

\section{A worked credit example for the alarm task}
\label{app:credit-example}
\setcounter{table}{0}
\setcounter{figure}{0}

We apply the computation in Section~\ref{sec:credit} to the task in Figure~\ref{fig:problem}. The three checks require setting the wake-up alarm to 06:00, disabling Gym, and disabling Work; $K_g=3$, $c=0.5$, and $\gamma=1$. The illustrative group contains three trajectories: A reads the alarm list and declares completion; B changes the time and disables Gym but misses Work; C completes all three requirements. Table~\ref{tab:alarm-example} lists each turn's progress $\Phi_t$, progress reward $r_t$, return $G_t$, and two levels of advantage.

The three trajectory scores $S$ are 0, 0.333, and 1.5, and their group-standardized trajectory-level advantages are $-0.776$, $-0.353$, and $+1.129$. Completion distinguishes A and B despite both failing. The mean return over all turns for the task is 0.667; each turn-level advantage is its return minus this mean.

\begin{table}[!htbp]
\centering
\caption{Per-turn progress, reward, return, and advantages for three illustrative trajectories on the task in Figure~\ref{fig:problem}.}
\label{tab:alarm-example}
\small
\setlength{\tabcolsep}{3pt}
\begin{adjustbox}{max width=\linewidth}
\begin{tabular}{lclccccc}
\toprule
\textbf{Trajectory} & \textbf{Turn} & \textbf{Action} & $\boldsymbol{\Phi_t}$ & $\boldsymbol{r_t}$ & $\boldsymbol{G_t}$ & \textbf{\shortstack{Turn-level\\advantage}} & \textbf{\shortstack{Total\\advantage}} \\
\midrule
A (failed) & $1$ & Read alarm list & $0$ & $0$ & $0$ & $-0.667$ & $-1.443$ \\
 & $2$ & Declare completion & $0$ & $0$ & $0$ & $-0.667$ & $-1.443$ \\
B (failed) & $1$ & Read alarm list & $0$ & $0$ & $0.333$ & $-0.333$ & $-0.686$ \\
 & $2$ & Set wake-up alarm to 06:00 & $1/3$ & $+0.167$ & $0.333$ & $-0.333$ & $-0.686$ \\
 & $3$ & Disable Gym & $2/3$ & $+0.167$ & $0.167$ & $-0.500$ & $-0.853$ \\
 & $4$ & Declare completion & $2/3$ & $0$ & $0$ & $-0.667$ & $-1.019$ \\
C (successful) & $1$ & Read alarm list & $0$ & $0$ & $1.5$ & $+0.833$ & $+1.962$ \\
 & $2$ & Set wake-up alarm to 06:00 & $1/3$ & $+0.167$ & $1.5$ & $+0.833$ & $+1.962$ \\
 & $3$ & Disable Gym & $2/3$ & $+0.167$ & $1.333$ & $+0.667$ & $+1.795$ \\
 & $4$ & Disable Work & $1$ & $+0.167$ & $1.167$ & $+0.500$ & $+1.629$ \\
 & $5$ & Declare completion & $1$ & $0$ & $1.0$ & $+0.333$ & $+1.462$ \\
\bottomrule
\end{tabular}
\end{adjustbox}
\end{table}

The table illustrates three properties from Section~\ref{sec:credit}. Reading the list and the immediately following progress turn receive equal credit because they have the same remaining progress to earn. After each progress turn, the next turn's credit decreases by exactly the progress reward just earned, 0.167. In B, declaring completion after making progress has the lowest turn-level credit, tied with both turns in A. Across trajectories, every turn in C has higher total credit than any turn in B, and every turn in B has credit at least as high as those in A.

\section{How discounting penalizes long trajectories}
\label{app:length-tax}
\setcounter{table}{0}
\setcounter{figure}{0}

Section~\ref{sec:discount} shows that discounting leads models to use fewer turns, with accuracy losses concentrated on medium and hard tasks. We first state where discounting reduces credit, then use illustrative and actual training trajectories to quantify the reduction, and finally examine the trained models by task difficulty.

\begin{proposition}[Length tax]
\label{prop:length-tax}
Using the notation of Section~\ref{sec:credit}, consider turn $p$ of a trajectory. For any $t\le p$, let the discounted value at turn $t$ of rewards received after $p$ be
\begin{equation}
D_t^{(p)}=\sum_{u=p+1}^{T}\gamma^{u-t}r_u+\gamma^{T-t}R.
\end{equation}
Inserting a state-preserving turn after turn $p$ delays every subsequent reward, including the outcome reward, by one turn. The return $G_t^{(\gamma)}$ then decreases by $(1-\gamma)D_t^{(p)}$ for every $t\le p$.
\end{proposition}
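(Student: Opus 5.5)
The plan is a direct computation from the definition of the discounted return in Equation~\eqref{eq:discounted-return}. First I make the inserted turn precise. Let $\tilde\tau$ be the trajectory obtained by inserting, after turn $p$, a turn that leaves the environment state (and hence $\Phi$) unchanged. Its length is $\tilde T=T+1$. Its progress sequence is $\tilde\Phi_u=\Phi_u$ for $u\le p$, $\tilde\Phi_{p+1}=\Phi_p$, and $\tilde\Phi_{u}=\Phi_{u-1}$ for $u\ge p+2$. By Equation~\eqref{eq:progress-reward}, the rewards are therefore $\tilde r_u=r_u$ for $u\le p$, $\tilde r_{p+1}=0$, and $\tilde r_u=r_{u-1}$ for $p+2\le u\le T+1$. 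The outcome reward $R$ is unchanged, since the final state is the same. This bookkeeping is the one step that needs care: the state-preserving turn earns zero reward precisely because $\Phi$ depends only on the state.

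Next, fix $t\le p$ and split both returns at turn $p$. For the original trajectory,
\begin{equation*}
G_t^{(\gamma)}=\sum_{u=t}^{p}\gamma^{u-t}r_u + D_t^{(p)}.
\end{equation*}
For the modified trajectory, the head $\sum_{u=t}^{p}\gamma^{u-t}\tilde r_u$ is identical, and the inserted term contributes $\gamma^{p+1-t}\cdot 0=0$. Reindexing the tail with $v=u-1$ gives
\begin{equation*}
\sum_{u=p+2}^{T+1}\gamma^{u-t}r_{u-1}+\gamma^{T+1-t}R=\gamma\Bigl(\sum_{v=p+1}^{T}\gamma^{v-t}r_v+\gamma^{T-t}R\Bigr)=\gamma D_t^{(p)}.
\end{equation*}
Subtracting the two returns, the heads cancel, and
\begin{equation*}
G_t^{(\gamma)}-\tilde G_t^{(\gamma)}=(1-\gamma)D_t^{(p)},
\end{equation*}
which is the claimed decrease.

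I do not expect a real obstacle, since the argument is a reindexing. The points to state explicitly are these:
\begin{itemize}
\item Turns $u\le p$ are unaffected.
\item The restriction $t\le p$ is what makes the head unchanged; for $t>p$ the turn indices shift and the comparison is different.
\item For $\gamma=1$ the tax vanishes, consistent with the closed form in Equation~\eqref{eq:undiscounted-return}, which depends only on $\Phi_{T}$ and $\Phi_{t-1}$.
\end{itemize}
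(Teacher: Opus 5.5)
Your proof is correct and matches the paper's argument. Like the paper, you split $G_t^{(\gamma)}$ at turn $p$ into an unchanged head plus the tail $D_t^{(p)}$, note that the insertion multiplies the tail by $\gamma$, and subtract. Your explicit bookkeeping of $\tilde\Phi$ and $\tilde r$ (the inserted turn earns zero reward, and later rewards and $R$ shift by one index) only spells out what the paper's one-line proof takes for granted.
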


\begin{proof}
Before insertion, $G_t^{(\gamma)}=\sum_{u=t}^{p}\gamma^{u-t}r_u+D_t^{(p)}$. After insertion, each term after $p$ receives one additional factor of $\gamma$, giving $\sum_{u=t}^{p}\gamma^{u-t}r_u+\gamma D_t^{(p)}$. Subtracting gives the result.
\end{proof}

Proposition~\ref{prop:length-tax} shows that each additional turn reduces earlier turns' credit by $(1-\gamma)$ times the discounted value of rewards yet to arrive: 5\% per turn at $\gamma=0.95$, and no reduction at $\gamma=1$. The reduction depends only on the number of turns, regardless of what the inserted turn does. This is the length tax described in Section~\ref{sec:discount}.

Discounting also gives the two levels of credit different objectives. The trajectory-level term uses the undiscounted score $S_i=G_{i,1}^{(1)}$, whereas the turn-level term uses discounted returns when $\gamma<1$. Both correspond to the same objective $\mathbb E[G_1^{(1)}]$ only when $\gamma=1$.

\paragraph{Credit differences between equally successful trajectories.}
In an all-success group for the same task, every trajectory has $R=1$ and $\Phi_T=1$. Their scores are equal, so their trajectory-level advantages are zero and the turn-level term supplies the only within-group signal. Figure~\ref{fig:discount-length} plots the return $G_t$ at each turn for two illustrative successful trajectories. One takes 8 turns and passes a check at turns 2, 5, and 8; the other takes 16 turns and passes a check at turns 4, 10, and 16, with each progress event occurring at twice the turn index. At $\gamma=1$, a turn's return depends only on the remaining progress. Both trajectories have the same three-level staircase, with each level lasting twice as long in the longer trajectory. At $\gamma=0.95$, returns also decay with distance to the next reward. The longer trajectory receives less return at each progress stage, with the gap given by Proposition~\ref{prop:length-tax}. Statistics from actual training trajectories agree. We collect 119, 189, and 148 all-success groups from training the $\gamma=0.95$ variant at 4B, 9B, and 35B, respectively. Using the return formula in Section~\ref{sec:credit}, we average turn-level credit within each trajectory and compare trajectories within each group. At $\gamma=0.95$, the shortest trajectory has the highest average credit in 94.1\%, 94.2\%, and 91.9\% of groups, respectively. Recomputing credit on the same trajectories with $\gamma=1$ reduces these proportions to 5.0\%, 6.3\%, and 15.5\%.

\begin{figure}[!htbp]
\centering
\includegraphics[width=0.85\linewidth]{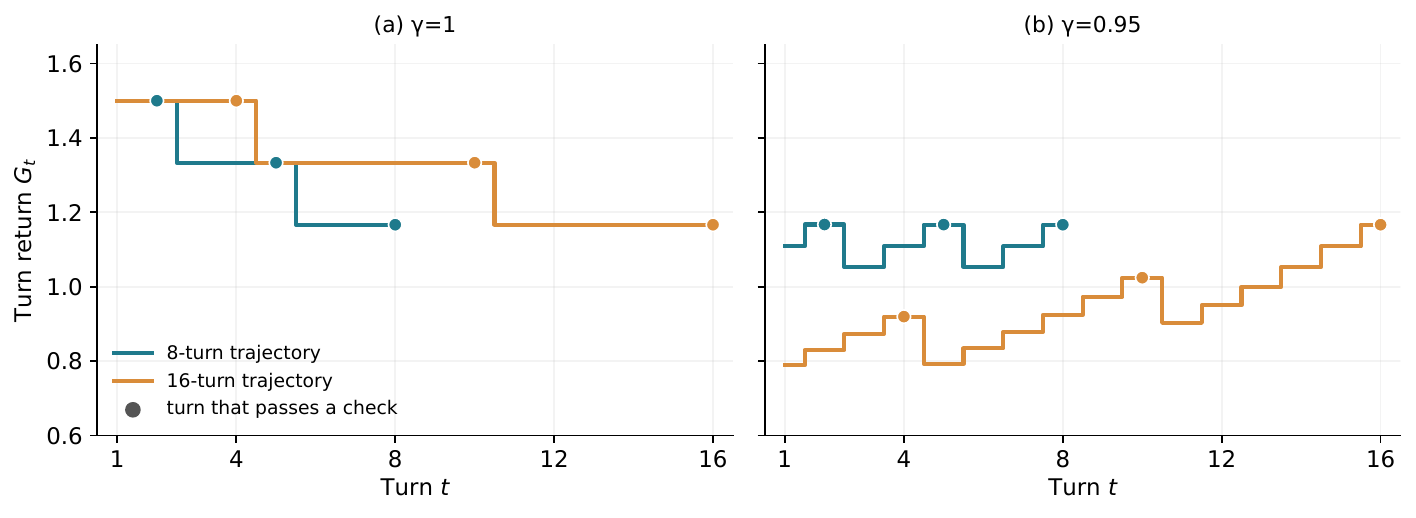}
\caption{Per-turn return $G_t$ for two illustrative successful trajectories with three checks and $c=0.5$.}
\label{fig:discount-length}
\end{figure}

\paragraph{Accuracy losses concentrate on medium and hard tasks.}
We divide test-normal tasks into easy, medium, and hard groups using GRPO's per-task success rates, independently of the methods being compared. Table~\ref{tab:difficulty} compares mean turn counts and success rates for $\gamma=1$, $\gamma=0.95$, and a variant that discounts only all-success groups (V4 in Appendix~\ref{app:group-variants}). Discounting shortens trajectories in all three difficulty groups. Easy tasks already require few turns, leaving little room for shortening and almost no change in success. Medium and hard tasks need more turns; removing these turns lowers success rates. Discounting only all-success groups produces intermediate results at 9B, while at 4B both trajectory length and success rate remain nearly unchanged.

\begin{table}[!htbp]
\centering
\caption{Mean turn counts and success rates (\%) by task difficulty on Test-Normal.}
\label{tab:difficulty}
\small
\setlength{\tabcolsep}{3pt}
\begin{adjustbox}{max width=\linewidth}
\begin{tabular}{llcccccc}
\toprule
& & \multicolumn{2}{c}{\textbf{Easy}} & \multicolumn{2}{c}{\textbf{Medium}} & \multicolumn{2}{c}{\textbf{Hard}} \\
\cmidrule(lr){3-4}\cmidrule(lr){5-6}\cmidrule(lr){7-8}
\textbf{Model} & \textbf{Configuration} & \textbf{Turns} & \textbf{Success} & \textbf{Turns} & \textbf{Success} & \textbf{Turns} & \textbf{Success} \\
\midrule
4B & ProCredit $\gamma=1$ & $18.4$ & $93.3$ & $23.9$ & $73.8$ & $33.7$ & $20.2$ \\
4B & ProCredit $\gamma=0.95$ & $15.5$ & $92.6$ & $20.2$ & $69.0$ & $29.3$ & $17.9$ \\
4B & V4 & $17.9$ & $93.9$ & $22.9$ & $73.9$ & $33.0$ & $19.8$ \\
9B & ProCredit $\gamma=1$ & $16.4$ & $95.2$ & $21.5$ & $78.7$ & $31.2$ & $19.4$ \\
9B & ProCredit $\gamma=0.95$ & $14.0$ & $88.6$ & $18.0$ & $64.8$ & $24.1$ & $9.8$ \\
9B & V4 & $15.7$ & $93.2$ & $20.1$ & $71.1$ & $28.1$ & $15.4$ \\
\bottomrule
\end{tabular}
\end{adjustbox}
\end{table}

\section{Complete comparisons of group-dependent variants}
\label{app:group-variants}
\setcounter{table}{0}
\setcounter{figure}{0}

Section~\ref{sec:discount} uses the same $\gamma$ for all groups. Here we apply discounting, or disable the turn-level term, only for selected types of sampled groups. The four configurations modify only the turn-level term and retain the trajectory-level term.

\begin{table}[!htbp]
\centering
\caption{Group-dependent discounting or removal of turn-level credit on Test-Normal (\%).}
\label{tab:groups}
\normalsize
\renewcommand{\arraystretch}{1.08}
\setlength{\tabcolsep}{8pt}
\begin{tabular}{lll}
\toprule
\multicolumn{3}{l}{\textbf{Turn-level configurations}} \\
\textbf{Variant} & \textbf{All-success groups} & \textbf{Mixed and all-fail groups} \\
\midrule
V1 & $\gamma=0.95$ & $\gamma=0.95$ \\
ProCredit & $\gamma=1$ & $\gamma=1$ \\
V3 & Disabled & $\gamma=0.95$ \\
V4 & $\gamma=0.95$ & $\gamma=1$ \\
\bottomrule
\end{tabular}

\medskip
\setlength{\tabcolsep}{3pt}
\begin{tabular*}{\linewidth}{@{\extracolsep{\fill}}lcccccc@{}}
\toprule
& \multicolumn{2}{c}{\textbf{4B}} & \multicolumn{2}{c}{\textbf{9B}} & \multicolumn{2}{c}{\textbf{35B}} \\
\cmidrule(lr){2-3}\cmidrule(lr){4-5}\cmidrule(lr){6-7}
\textbf{Variant} & \textbf{TGC $\uparrow$} & \textbf{SGC $\uparrow$} & \textbf{TGC $\uparrow$} & \textbf{SGC $\uparrow$} & \textbf{TGC $\uparrow$} & \textbf{SGC $\uparrow$} \\
\midrule
V1 & \score{68.48}{2.53} & \score{45.24}{5.61} & \score{68.20}{5.66} & \score{45.98}{8.65} & \score{65.13}{7.37} & \score{44.64}{8.12} \\
ProCredit & \secondscore{71.53}{0.48} & \secondscore{48.88}{0.45} & \bestscore{78.15}{2.61} & \bestscore{59.82}{3.95} & \bestscore{78.65}{0.85} & \bestscore{59.60}{1.35} \\
V3 & \score{69.84}{1.62} & \score{47.10}{2.35} & \score{68.90}{5.01} & \score{44.57}{7.91} & \score{71.50}{6.37} & \score{51.34}{7.77} \\
V4 & \bestscore{71.70}{3.76} & \bestscore{49.63}{7.03} & \secondscore{73.64}{1.98} & \secondscore{52.68}{3.33} & \secondscore{74.31}{2.99} & \secondscore{57.44}{3.53} \\
\bottomrule
\end{tabular*}
\end{table}

V1 and V4 differ only in whether mixed and all-fail groups are discounted. Removing discounting for these groups improves TGC and SGC at all three model sizes. V4 and ProCredit differ only in whether all-success groups are discounted. ProCredit scores higher at 9B and 35B; they perform similarly at 4B, where V4 varies more across training runs. V3 disables turn-level credit in all-success groups but retains discounting elsewhere, and scores below ProCredit at all three model sizes. The main method retains the turn-level term and uses $\gamma=1$ for every group.

\section{Learning dynamics}
\label{app:learning}
\setcounter{table}{0}
\setcounter{figure}{0}

To examine how progress affects learning, Figure~\ref{fig:learning-steps} compares training and development success rates for GRPO, GRPO-$\Phi$, and ProCredit over the first 50 training steps; all three methods sample the same number of trajectories per step, so steps measure training compute. Training success improves for all three methods and remains similar across them. Differences emerge on the development set, where ProCredit is generally higher during the middle and later stages at 4B and 9B.

\begin{figure}[!htbp]
\centering
\includegraphics[width=0.85\linewidth]{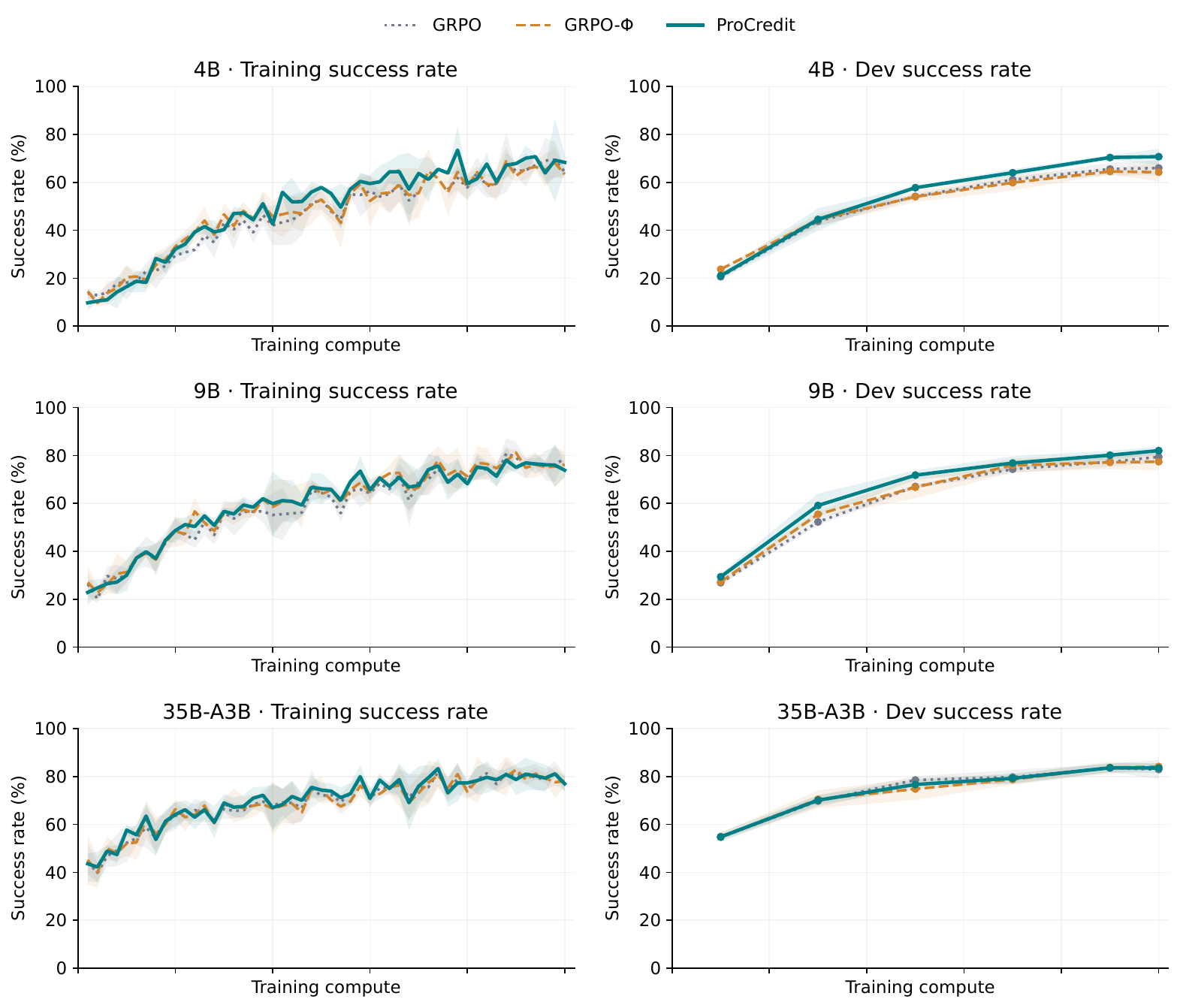}
\caption{Training and development success rates over the first 50 training steps for GRPO, GRPO-$\Phi$, and ProCredit at three model sizes.}
\label{fig:learning-steps}
\end{figure}

Compared with GRPO-$\Phi$, ProCredit's gains appear on held-out tasks. By the later stages of training, development success rises from 64.6\% to 70.4\% at 4B and from 77.1\% to 80.1\% at 9B. GRPO-$\Phi$ matches or exceeds ProCredit's training success at some steps, but these gains do not carry over to the development set.

\paragraph{Comparison by training time.}
Figure~\ref{fig:learning-time} plots the same 4B development checkpoints against cumulative training time. Each run uses four GPUs, and cumulative time excludes initialization, downtime, and development evaluation. Under a six-hour budget, using each run's latest evaluated checkpoint available at that time, ProCredit achieves a mean success rate of 70.39\%, compared with 61.99\% for GRPO and 64.55\% for GRPO-$\Phi$.

\begin{figure}[!htbp]
\centering
\includegraphics[width=0.6\linewidth]{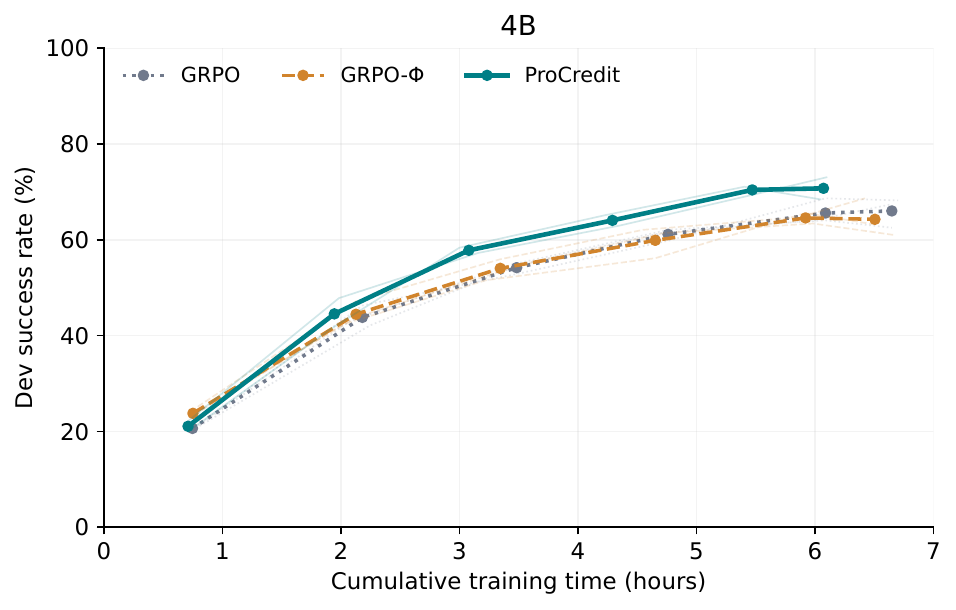}
\caption{4B development success against cumulative training time. Thick curves show seed means at the median time of each checkpoint; faint lines show individual seeds.}
\label{fig:learning-time}
\end{figure}

\section{Retention of BFCL capabilities}
\label{app:bfcl}
\setcounter{table}{0}
\setcounter{figure}{0}

\paragraph{Evaluation scope.}
We use BFCL to assess changes in existing function-calling capabilities after AppWorld training, covering non-live single-turn AST evaluation and four multi-turn subcategories. Categories outside this scope do not enter the aggregate scores. BFCL is not used for training or for selecting checkpoints, seeds, or methods. We evaluate the same checkpoints as in the main AppWorld table.

\begin{table}[!htbp]
\centering
\caption{BFCL scores (\%) before and after AppWorld training.}
\label{tab:bfcl}
\small
\setlength{\tabcolsep}{3pt}
\begin{adjustbox}{max width=\linewidth}
\begin{tabular}{lcccc}
\toprule
& \multicolumn{2}{c}{\textbf{4B}} & \multicolumn{2}{c}{\textbf{9B}} \\
\cmidrule(lr){2-3}\cmidrule(lr){4-5}
\textbf{Method} & \textbf{Single-turn AST $\uparrow$} & \textbf{Multi-turn $\uparrow$} & \textbf{Single-turn AST $\uparrow$} & \textbf{Multi-turn $\uparrow$} \\
\midrule
Base model & \scoreonly{85.73} & \scoreonly{52.38} & \scoreonly{87.19} & \scoreonly{52.25} \\
GRPO & \score{86.52}{0.30} & \score{53.54}{1.23} & \score{86.30}{0.22} & \score{52.21}{1.60} \\
GRPO-$\Phi$ & \score{85.28}{0.06} & \score{52.50}{1.25} & \score{86.47}{0.30} & \score{53.21}{0.69} \\
ProCredit & \score{85.94}{0.10} & \score{53.04}{1.01} & \score{86.73}{0.17} & \score{53.83}{1.94} \\
\bottomrule
\end{tabular}
\end{adjustbox}
\end{table}

\paragraph{General function-calling capabilities remain stable.}
All three trained methods score close to the base models: single-turn AST scores differ by less than 1 percentage point and multi-turn scores by less than 2 points, comparable to the variation across three training runs. AppWorld training does not weaken the function-calling capabilities measured by BFCL. The ordering of methods varies by model size, and we do not rank them here.

\section{Two properties of the trajectory score}
\label{app:score-properties}
\setcounter{table}{0}
\setcounter{figure}{0}

We use the notation from Sections~\ref{sec:progress} and~\ref{sec:credit}.

\begin{proposition}
\label{prop:success-order}
For any $c>0$, every successful trajectory for a task has a strictly higher score than every failed trajectory for that task.
\end{proposition}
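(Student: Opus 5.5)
The plan is to reduce both scores to closed form with Equation~\eqref{eq:trajectory-score}, $S=R+c\,(\Phi_T-\Phi_0)$, and then compare them using two facts: progress is bounded in $[0,1]$, and success forces full progress. The initial progress $\Phi_0=\Phi_g(s_0)$ depends only on the task's initial state, not on the trajectory. So $\Phi_0$ is common to every member of $\mathcal G_g$ and cancels in any comparison. In our environments $\Phi_0=0$, but I will not need this.

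\textbf{Step 1 (successful trajectory).} A successful trajectory has $R=1$. Success means the final state satisfies all acceptance checks of the task. $\mathcal C_g$ is a subset of those checks: at most the checks that fail initially are dropped. Hence every $C_{g,m}(s_T)=1$, so $\Phi_T=1$ by Equation~\eqref{eq:progress}. This gives $S_{\mathrm{succ}}=1+c\,(1-\Phi_0)$.

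\textbf{Step 2 (failed trajectory).} A failed trajectory has $R=0$. Since $\Phi_T\le 1$ by definition of a fraction of checks, $S_{\mathrm{fail}}=c\,(\Phi_T-\Phi_0)\le c\,(1-\Phi_0)$.

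\textbf{Step 3 (comparison).} Combining the two steps gives $S_{\mathrm{succ}}-S_{\mathrm{fail}}\ge 1>0$ for every $c>0$, which is the claim. The bound even holds uniformly: the gap is at least the outcome reward's unit. The key point is that the progress component can never exceed $c\,(1-\Phi_0)$, and a successful trajectory always attains this maximum. A failed trajectory may also attain it, for instance when it passes all recorded checks but fails the outcome for another reason. Even then it is exactly $1$ below, which is why strictness survives.

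\textbf{Main obstacle.} The only delicate step is Step 1: justifying that $R=1$ implies $\Phi_T=1$. I would argue it directly from the construction in Section~\ref{sec:progress}. $\mathcal C_g$ is drawn from the same executable acceptance checks that determine $R$, so success implies each retained check passes in $s_T$. I would state this explicitly as the modeling assumption linking $R$ and $\Phi$. If instead a check in $\mathcal C_g$ could fail at success, the inequality could break for large $c$, so the assumption is genuinely needed.
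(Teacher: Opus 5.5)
Your proof is correct and is essentially the paper's own argument: both write $S=R+c\,(\Phi_T-\Phi_0)$ and use the shared $\Phi_0$, the bound $\Phi_T\le 1$, and the implication $R=1\Rightarrow\Phi_T=1$ (because $\mathcal C_g$ is a subset of the acceptance checks) to obtain a gap of at least $1$ independent of $c$. One small slip in Step 1: the construction keeps only checks that fail in the initial state, so the dropped checks are those that \emph{pass} initially, not those that fail; the argument is unaffected because it uses only the subset relation.
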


\begin{proposition}
\label{prop:optimal-policy}
If a policy can complete the task with probability 1, the set of policies maximizing expected trajectory score is identical to the set maximizing success probability.
\end{proposition}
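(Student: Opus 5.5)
The plan is to reduce Proposition~\ref{prop:optimal-policy} to Proposition~\ref{prop:success-order} by working with the score written as $S=R+c(\Phi_T-\Phi_0)$ from Equation~\eqref{eq:trajectory-score}, with $\Phi_0=0$. First we fix the relevant quantities. On any successful trajectory every acceptance check passes in the final state, so $\Phi_T=1$ and $S=1+c$. On any failed trajectory $R=0$ and $\Phi_T\le 1$, so $S\le c<1+c$; this is Proposition~\ref{prop:success-order}. One subtlety needs to be stated explicitly: we assume that $R=1$ exactly when all checks in $\mathcal C_g$ pass in $s_T$. If success could occur with $\Phi_T<1$, we only need $S\ge 1>c\ge$ (failed score) when $c<1$. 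To avoid this, we will state the identification of success with passing all checks as the standing assumption, since acceptance is defined by these checks in Section~\ref{sec:progress}.

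Next we bound the expected score for an arbitrary policy $\pi$. Write $p(\pi)$ for its success probability. Then
\[
\mathbb E_\pi[S]=(1+c)\,p(\pi)+\mathbb E_\pi[c\,\Phi_T\mid R=0]\,(1-p(\pi))\le (1+c)\,p(\pi)+c\,(1-p(\pi))\le 1+c.
\]
Equality throughout holds iff $p(\pi)=1$, because the failure branch contributes at most $c<1+c$ per unit of probability. Strictly, $\mathbb E_\pi[S]=1+c$ forces $(1+c)-\mathbb E_\pi[S]\ge (1-p(\pi))\cdot 1$, hence $p(\pi)=1$.

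Finally we compare the two argmax sets. By hypothesis some policy attains $p=1$, so $\max_\pi p(\pi)=1$ and the success-maximizing set is $\{\pi: p(\pi)=1\}$. Any such policy has score $1+c$, the upper bound, so $\max_\pi\mathbb E_\pi[S]=1+c$. By the equality case above, the score-maximizing set is also exactly $\{\pi:p(\pi)=1\}$. The two sets coincide.

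The main obstacle is not the algebra but the equality-case step. It relies on the strict gap between the maximal failed score $c$ and the successful score $1+c$ provided by Proposition~\ref{prop:success-order}, together with the solvability hypothesis, which makes the supremum an attained maximum equal to $1+c$. Without solvability the claim can fail: a policy with lower success probability but high partial progress could have a larger expected score. I would remark on this to explain why the hypothesis is needed.
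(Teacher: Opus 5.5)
Your proof is correct and is essentially the paper's argument: bound $\mathbb E_\pi[S]$ by $1+c(1-\Phi_0)$, show that equality forces success with probability 1, and use solvability to make the bound attained. The only difference is in the equality step: you condition on the outcome and use the per-unit gap from Proposition~\ref{prop:success-order}, while the paper bounds $\mathbb E_\pi[R]\le 1$ and $\mathbb E_\pi[\Phi_T]\le 1$ separately.

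One small correction: your argument only ever uses $R=1\Rightarrow\Phi_T=1$, which is all the paper assumes. The two-sided ``exactly when'' assumption you state is stronger than needed and need not hold here. $\mathcal C_g$ keeps only checks that fail initially and may be a proper subset of the acceptance checks, so $\Phi_T=1$ with $R=0$ is possible. The paper also keeps $\Phi_0$ general rather than setting it to zero.
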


The proofs use three facts: all trajectories for a task share the same $\Phi_0$; $\Phi_T\in[0,1]$; and $R=1$ implies $\Phi_T=1$, because the outcome reward requires all acceptance checks to pass and $\mathcal C_g$ is a subset of those checks.

\begin{proof}[Proof of Proposition~\ref{prop:success-order}]
A successful trajectory has score $S=1+c\,(1-\Phi_0)$. A failed trajectory has $R=0$ and $\Phi_T\le1$, so its score satisfies $S\le c\,(1-\Phi_0)$. The difference is at least 1, independently of $c$.
\end{proof}

\begin{proof}[Proof of Proposition~\ref{prop:optimal-policy}]
For any policy $\pi$,
\begin{equation}
\mathbb E_\pi[S]
=\mathbb E_\pi[R]+c\,(\mathbb E_\pi[\Phi_T]-\Phi_0)
\le1+c\,(1-\Phi_0)=:S_{\max}.
\end{equation}
Equality holds if and only if $\mathbb E_\pi[R]=1$. Necessity follows because $\mathbb E_\pi[R]$ and $\mathbb E_\pi[\Phi_T]$ are each bounded above by 1, and equality requires both to reach that bound. Sufficiency follows because $R=1$ implies $\Phi_T=1$. When the task is solvable, $S_{\max}$ is attainable. Thus, the policies maximizing $\mathbb E[S]$ are exactly those that succeed with probability 1, which are also exactly the policies maximizing $\mathbb E[R]$.
\end{proof}

Two points qualify these results. First, Proposition~\ref{prop:optimal-policy} does not hold when the task is unsolvable: expected score is success probability plus $c$ times expected completion, so optimizing it trades success probability for partial completion at a rate set by $c$. This trade-off is the purpose of scoring partial completion. Second, the policy-invariance theorem of \citet{ng1999} requires zero potential at terminal states, which corresponds to subtracting $c\,\Phi_T$ at the final turn. The trajectory score then becomes $R-c\,\Phi_0$: optimal policies remain unchanged, but trajectory-level progress information disappears, leaving all-fail groups with zero advantages again. We retain $\Phi_T$ to preserve this information. AppWorld has deterministic environments and a reference solution for every task, so the premise of Proposition~\ref{prop:optimal-policy} holds for the training tasks.

\section{Why turns with different histories can share one reference value}
\label{app:pooling}
\setcounter{table}{0}
\setcounter{figure}{0}

GRPO compares trajectories sampled for the same task, all starting from the same initial state. ProCredit's turn-level term pools all turns for the task and subtracts their mean, even though different trajectories have different histories by turn $t$. This section explains why this comparison remains valid and how it relates to state-based grouping.

Both terms subtract a reference computed from the sampled group. For the trajectory-level term, including a trajectory's own score in the group mean scales the expected policy gradient by $(N-1)/N$, as in GRPO. The turn-level term subtracts the mean return over all turns of all trajectories for the same task. A turn's reference need not come from turns with the same history: the turns of the other trajectories are sampled independently of the action taken at this turn, so their mean $b_{-i}$ is a valid baseline for every turn of trajectory $i$. Equation~\eqref{eq:turn-advantage} uses the pooled mean, which differs from $b_{-i}$ by $w_i(\bar G_i-b_{-i})$, where $w_i=T_i/|\mathcal U_g|$ is the share of turns contributed by trajectory $i$, about $1/N$, and $\bar G_i$ is its mean turn return. This term varies with the trajectory's outcome, progress, and length, and is a bounded correction of order $1/N$. Equation~\eqref{eq:adjacent-credit} is unaffected, since it requires only that every turn for a task subtract the same value. The resulting turn-level advantage measures how much the return from that turn exceeds the mean over all turns for the task.

State-based grouping, as in GiGPO with $\gamma=1$ in Table~\ref{tab:granularity}, changes the reference value. It groups turns at the same progress state for the same task and subtracts the group's mean, with the aim of evaluating arrival at a state separately from the current action. The two turn-level advantages differ by one term:
\begin{equation}
A^{\mathrm{anchor}}_{i,t}=A^{\mathrm{turn}}_{i,t}
-\bigl(\mu_{\mathcal A(i,t)}-\mu_{\mathcal U_g}\bigr),
\end{equation}
where $\mu_{\mathcal A}$ is the mean for the turn's anchor group and $\mu_{\mathcal U}$ is the mean over all turns for the task. The relative ordering of turns within the same state is identical. On AppWorld, anchor groups have a median size of 6.6 turns and fewer than 1\% are singletons, so group size is not the issue. Yet this method does not outperform ProCredit at any of the three model sizes. There are two reasons. Once turns are grouped by state, credit for reaching a good state is no longer assigned to the current turn; earlier turns must recover that credit through comparisons within their own groups, which requires different continuations within those groups. Also, the anchors record only passed checks, omitting database contents and recent actions. Turns in the same group may therefore face different decisions, and a reference value estimated from just a few turns varies more than the mean over all turns for the task.

\section{Training settings}
\label{app:training-settings}

Table~\ref{tab:training-settings} lists the training settings of ProCredit behind Tables~\ref{tab:main} and~\ref{tab:toolsandbox}. Length limits and the number of GPUs grow with model size; the ToolSandbox settings differ from AppWorld only in tasks per step, training steps, and the turn limit. Baselines use the hyperparameters of their original papers.

\begin{table}[!htbp]
\centering
\caption{Training settings of ProCredit.}
\label{tab:training-settings}
\small
\setlength{\tabcolsep}{8pt}
\begin{tabular}{lrr}
\toprule
\textbf{Setting} & \textbf{AppWorld} & \textbf{ToolSandbox} \\
\midrule
Rollouts per task & 8 & 8 \\
Tasks per step & 25 & 16 \\
Training steps & 50 & 35 \\
Learning rate & 1e-6 & 1e-6 \\
PPO clip range & 0.2 & 0.2 \\
KL term / entropy term & none & none \\
Loss aggregation & token-mean & token-mean \\
Maximum turns per trajectory & 50 & 30 \\
Maximum tokens per turn & 2048 & 2048 \\
Maximum prompt length & 4096 & 8192 \\
Maximum response length & 28672--32768 & 28672 \\
Context length & 32768--36864 & 32768 \\
Sampling temperature & 1.0 & 1.0 \\
Progress coefficient $c$ / discount $\gamma$ & 0.5 / 1 & 0.5 / 1 \\
GPUs per run & 4--8 & 2 \\
\bottomrule
\end{tabular}
\end{table}

\end{document}